\documentclass[sigconf,screen, nonacm]{acmart}

\usepackage{amsthm}
\usepackage{amsmath}
\usepackage{mathtools}
\usepackage{graphicx}
\usepackage{subcaption}
\usepackage{algorithm}
\usepackage{algorithmic}
\usepackage{balance}
\usepackage{tcolorbox}
\definecolor{dkred}{rgb}{0.8,0.,0.}
\definecolor{dkblue}{rgb}{0,0.08,0.45}
\newcommand{\red}[1]{\textcolor{dkred}{#1}}

\usepackage{wrapfig}
\theoremstyle{plain}
\newtheorem{condition}{Condition}[section]

\newcommand{\mE}{\mathbb{E}}

\newcommand{\calD}{\mathcal{D}}
\newcommand{\calX}{\mathcal{X}}
\newcommand{\calF}{\mathcal{F}}
\newcommand{\calA}{\mathcal{A}}

\newcommand{\nabt}{\nabla_{\theta}}

\newcommand{\trueV}{V(\pi_{\theta})}
\newcommand{\ips}{\nabt \widehat{V}_{\mathrm{IPS}}(\pi_\theta; \mathcal{D})}
\newcommand{\dr}{\nabt \widehat{V}_{\mathrm{DR}}(\pi_\theta; \mathcal{D})}

\newcommand{\lcpi}{\nabt \widehat{V}_{\mathrm{LCPI}}(\pi_{\theta}; \mathcal{D})}

\AtBeginDocument{%
  \providecommand\BibTeX{{%
    \normalfont B\kern-0.5em{\scshape i\kern-0.25em b}\kern-0.8em\TeX}}}

\begin{document}

\title{Offline Contextual Bandits in the Presence of New Actions}

\author{Ren Kishimoto} 
\affiliation{  
    \institution{Institute of Science Tokyo}
    \city{Tokyo}
    \country{Japan}
}
\email{kishimoto.r.ab@m.titech.ac.jp}

\author{Tatsuhiro Shimizu} 
\affiliation{
    \institution{Yale University}
    \city{Connecticut}
    \country{USA}
}
\email{tatsuhiro.shimizu@yale.edu}

\author{Kazuki Kawamura} 
\affiliation{
    \institution{Sony Group Corporation}
    \city{Tokyo}
    \country{Japan}
}
\email{Kazuki.Kawamura@sony.com}

\author{Takanori Muroi} 
\affiliation{
    \institution{Sony Group Corporation}
    \city{Tokyo}
    \country{Japan}
}
\email{Takanori.Muroi@sony.com}

\author{Yusuke Narita} 
\affiliation{
    \institution{Yale University}
    \city{Connecticut}
    \country{USA}
}
\email{yusuke.narita@yale.edu}

\author{Yuki Sasamoto} 
\affiliation{
    \institution{Sony Group Corporation}
    \city{Tokyo}
    \country{Japan}
}
\email{Yuki.Sasamoto@sony.com}

\author{Kei Tateno} 
\affiliation{
    \institution{Sony Group Corporation}
    \city{Tokyo}
    \country{Japan}
}
\email{Kei.Tateno@sony.com}

\author{Takuma Udagawa} 
\affiliation{
    \institution{Sony Group Corporation}
    \city{Tokyo}
    \country{Japan}
}
\email{Takuma.Udagawa@sony.com}

\author{Yuta Saito}
\affiliation{
    \institution{Hanjuku-kaso, Co., Ltd. }
    \city{Tokyo}
    \country{Japan}
}
\email{saito@hanjuku-kaso.com}

\renewcommand{\shortauthors}{Ren Kishimoto, et al.}

\begin{abstract}
Automated decision-making algorithms drive applications in domains such as recommendation systems and search engines. These algorithms often rely on off-policy contextual bandits or \textit{off-policy learning} (OPL). Conventionally, OPL selects actions that maximize the expected reward from an existing action set. However, in many real-world scenarios, actions, such as news articles or video content, change continuously, and the action space evolves over time compared to when the logged data was collected. We define actions introduced after deploying the logging policy as \textit{new actions} and focus on the problem of OPL with new actions. Existing OPL methods identify optimal actions from the existing set effectively. However, these methods cannot learn and select new actions because no relevant data are logged. To address this limitation, we propose a new OPL method that leverages action features. In particular, we first introduce the Local Combination PseudoInverse (LCPI) estimator for the policy gradient, generalizing the PseudoInverse estimator initially proposed for off-policy evaluation of slate bandits. LCPI controls the trade-off between reward-modeling condition and the condition for data collection regarding the action features, capturing the interaction effects among different dimensions of action features. Furthermore, we propose a generalized algorithm called \textbf{\textit{Policy Optimization for Effective New Actions (PONA)}}, which integrates LCPI, a component specialized for new action selection, with Doubly Robust (DR), which excels at learning within existing actions. We define PONA as a weighted sum of the LCPI and DR estimators, optimizing both the selection of existing and new actions, and allowing the proportion of new action selections to be adjusted by controlling the weight parameter. Through extensive experiments, we demonstrate that PONA efficiently selects new actions while maintaining the overall policy performance as opposed to most existing methods that cannot select new actions.
\end{abstract}

\maketitle

\section{Introduction}
We increasingly rely on data-driven algorithms to optimize decision-makings in various domains, including recommendation systems~\cite{gilotte2018offline, saito2021counterfactual, felicioni2022off, saito2021evaluating, saito2021open}, search engines~\cite{li2015counterfactual}, advertising~\cite{bottou2013counterfactual}, and medical treatments~\cite{kallus2018policy}. These problems take the form of contextual bandits, where a policy selects actions based on observed contexts, and the corresponding rewards become available. The goal in such problems is to learn a policy that maximizes the expected reward. \textit{Off-Policy Learning} (OPL)~\cite{swaminathan2015batch,metelli2021subgaussian,swaminathan2017off,saito2024long} enables learning such policies by leveraging previously collected data, eliminating the need for online deployment and avoiding the risk of negatively impacting user experiences through active exploration~\cite{saito2021counterfactual,kiyohara2023off}. Typically, OPL focuses on identifying the best actions within a predefined set of existing actions. However, in many real-world scenarios, the action space evolves over time as new actions emerge or existing ones are updated. For example, search systems must process new articles daily, and recommendation systems for products or videos continually welcome new items. We define these actions, introduced after the data collection phase, as \textbf{\textit{new actions}}. In this work, we rigorously address the challenges of OPL regarding the presence of such new actions for the first time in the relevant literature.

Existing OPL methods can be broadly categorized into policy-based and regression-based approaches~\cite{saito2021counterfactual,saito2024potec}. Regression-based methods first predict the expected rewards of existing actions based on supervised learning on logged data, and then pick the action that has the highest estimated reward for each context~\cite{jeunen2021pessimistic}. In contrast, policy-based methods update the policy parameter based on the policy gradient~\cite{swaminathan2015counterfactual,saito2024potec}, which is estimated using techniques such as inverse propensity score (IPS)~\cite{strehl2010learning} and doubly robust (DR)~\cite{dudik2014doubly}. These methods are effective at identifying optimal actions within the set of existing ones. However, they are not designed to evaluate the effectiveness of new actions. In particular, the naive applications of existing methods cannot choose new actions at all, failing to give fair opportunities to them.

To address these critical limitations of the typical OPL methods, we consider and formulate a scenario where actions are represented as multi-dimensional action features. There exist many real-life problems where the actions are represented with features~\cite{felicioni2022off,saito2022off}. For example, in the problem of thumbnail optimization (Figure~\ref{fig:new_action}) like done at Netflix~\cite{amat2018artwork}, actions correspond to thumbnails described by features such as character type (e.g., male, female, child), title position (e.g., top, middle, bottom), and title size (e.g., large, small). In particular, the left side of the figure depicts existing thumbnails in the predefined action set, while the right side illustrates new thumbnails that are not at all observed in the logged data. As in this illustration, we consider leveraging the action features to develop a new OPL method that effectively selects new actions (i.e., unobserved combinations of action features) while still achieving competitive expected rewards as the overall policy performance.

Merely applying existing regression-based or policy-based approaches to observable action features does not resolve the problem of new actions. This is because the combinations of action features that represent new actions are never observed in the logged data.\footnote{We will indeed show empirically that existing OPL methods cannot handle new actions that have no observable rewards even with the use of action features.} A possible solution might be to draw inspiration from the PseudoInverse (PI) method introduced in the slate bandit setting~\cite{swaminathan2017off,vlassis2021control,kiyohara2024off}. Slate bandits refer to the setting where a policy aims to optimize the selection of slates, which consist of multiple action features (which are also called sub-actions)~\cite{swaminathan2017off}. To deal with the combinatorial slate spaces and resulting variance issues, the PI estimator relies on two conditions~\cite{swaminathan2017off,vlassis2021control}: (1) \textit{the logging policy provides full support for each action feature independently}, and (2) \textit{the expected reward function (q-function) is a linear combination of the intrinsic value of each feature dimension}. Under these conditions, the PI estimator can estimate the policy value or policy gradient without bias even if there are new actions. However, the linearity condition is particularly restrictive because it treats the effects of action features as completely independent, ignoring potentially significant interaction effects and thus introducing bias into the estimation~\cite{kiyohara2024off}. Therefore, we relax the linearity condition of PI to deal with local interactions among action features and propose \textbf{Local Combinatorial Feature Interaction (LCPI)}—a generalized version of the PI estimator. LCPI enables new policies to account for interactions among different dimensions of action features, allowing a more effective selection of new actions. Although LCPI is better at identifying effective new actions, existing OPL methods, such as the policy-based approach using DR to estimate the policy gradient~\cite{dudik2014doubly}, can be more effective for learning within existing actions. This is because they do not rely on any conditions about the form of the q-function. To leverage the strengths of both approaches, we finally introduce a hybrid algorithm named \textbf{\textit{Policy Optimization for Effective New Actions (PONA)}}, which integrates LCPI and DR by weighting them through a weight parameter. This design allows PONA to optimize the overall policy performance via the effective selection of existing actions while identifying promising new actions. Furthermore, by adjusting the weight parameter, PONA can control how aggressively it selects new actions, supporting both conservative (greater focus on existing actions) and aggressive (greater focus on new actions) strategies. Results on synthetic and real-world experiments demonstrate that PONA efficiently selects new actions and achieves satisfactory overall performance, whereas existing methods fail to choose new actions severely.

The key contributions of our work can be summarized as follows.
\begin{itemize}
    \item We propose a new policy gradient estimator, LCPI. It relaxes the linearity condition of PI in slate bandits by accounting for the interaction effects among action features, enabling a more effective selection of new actions.
    \item We introduce a hybrid method called PONA that integrates LCPI with an existing method (DR) through a weight parameter. This design allows us to balance the overall policy performance and selection of new actions.
    \item We conduct comprehensive experiments and show that PONA can select new actions effectively while maintaining satisfactory overall performance in OPL.
\end{itemize}

\begin{figure}[t]
    \centering
    \includegraphics[width=0.8\linewidth]{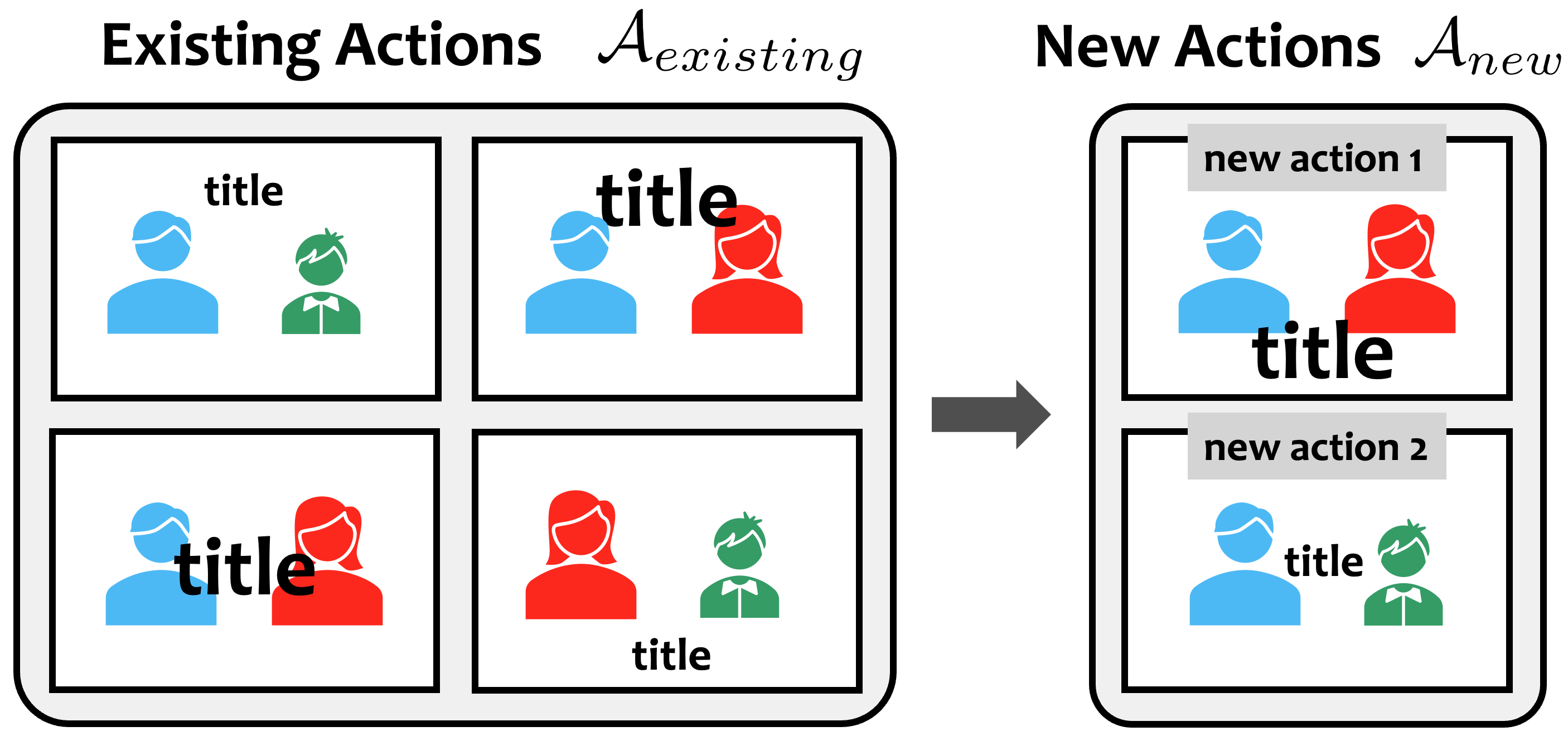} \vspace{-2mm}
    \caption{Examples of Existing and New Actions.}
    \raggedright
    \fontsize{9pt}{9pt}\selectfont \textit{Note}: 
    There are three types of action features: \textbf{character type} (chosen as two from male, female, or child), \textbf{title position} (top, center, or bottom), and \textbf{title size} (large or small). The left side of the figure illustrates the thumbnails for existing actions, while the right side shows examples of new actions: "Male and female, title position at the bottom, large title" (\textbf{new action 1}) and "Male and child, title position at the center, small title" (\textbf{new action 2}).
     \label{fig:new_action} \vspace{-4mm}
\end{figure}

\section{Preliminaries: Conventional OPL}
This section formulates the typical problem of OPL without new actions and introduces existing OPL methods.

First, let $x \in \mathcal{X} \subseteq \mathbb{R}^{d_x}$ represent a context vector such as user features in recommender systems. Given a context $x$, a possibly stochastic policy $\pi(a | x)$ selects an action $a \in \calA$ such as products or movies. Furthermore, let $r \in \mathbb{R}_{\ge0}$ denote a reward, such as conversion label and play duration, drawn from an unknown conditional distribution $p(r | x, a)$. In the setting of OPL, we are provided with a logged dataset $\mathcal{D} := \{(x_i, a_i, r_i)\}_{i=1}^n$ collected under a logging policy $\pi_0$ such that $(x, a, r) \sim p(x)\pi_0(a|x)p(r|x, a)$.

We define the expected reward under the deployment of a policy $\pi$ (policy value) as a measure of its overall performance: 
\begin{align*} 
    V(\pi) := \mathbb{E}_{p(x)\pi(a|x)p(r|x, a)}[r] = \mathbb{E}_{p(x)\pi(a|x)}[q(x, a)], 
\end{align*} 
where $q(x, a) := \mathbb{E}[r|x, a]$ is the expected reward function or \textit{q-function} for a given context $x$ and action $a$.

The goal in OPL is to learn a new policy $\pi_\theta$, parameterized by $\theta$, using only the logged dataset $\mathcal{D}$. When learning a policy, we aim to maximize the policy value as $\theta^* = \mathrm{argmax}_{\theta \in \Theta} V(\pi_\theta).$
The following describes two typical approaches to OPL.

Firstly, the \textbf{policy-based} approach updates the policy parameter $\theta$ using iterative gradient ascent, $\theta_{t+1} \leftarrow \theta_t + \eta \nabla_\theta \trueV$, where 
\begin{align} 
    \nabla_\theta \trueV := \mathbb{E}_{p(x)}\left[\sum_{a \in \mathcal{A}} \pi_\theta(a|x) q(x, a) \nabla_\theta \log \pi_\theta(a|x)\right],
\end{align} 
is the policy gradient (PG). Since the true PG is unknown, we need to estimate it using the logged data $\calD$. A common approach for this estimation is to use the IPS method~\cite{precup2000eligibility}: 
\begin{align} 
    \ips := \frac{1}{n} \sum_{i=1}^n \frac{\pi_\theta(a_i|x_i)}{\pi_0(a_i|x_i)} r_i \nabla_\theta \log \pi_\theta(a_i|x_i). \label{eq:ips}
\end{align} 
where $\pi_\theta(a|x)/\pi_0(a|x)$ is called the \textit{importance weight}. It plays a key role in ensuring the unbiasedness of IPS but often causes the issue of high variance and inefficient policy learning~\cite{saito2024potec}.

An improved approach to estimate the PG is using the DR estimator~\cite{dudik2014doubly}. It leverages a q-function estimator $\hat{q}(x,a)$ to reduce the variance in the PG estimation compared to IPS as
\begin{align} 
    \dr := &\frac{1}{n} \sum_{i=1}^n \frac{\pi_\theta(a_i|x_i)}{\pi_0(a_i|x_i)} \left(r_i - \hat{q}(x_i, a_i)\right) \nabla_\theta \log \pi_\theta(a_i|x_i) \notag \\ 
    & \qquad + \frac{1}{n} \sum_{i=1}^n \sum_{a \in \mathcal{A}} \pi_\theta(a|x_i) \hat{q}(x_i, a) \nabla_\theta \log \pi_\theta(a|x_i). \label{eq:dr} 
\end{align} 
IPS and DR are both unbiased against the true PG under the \textit{full support} condition (\textbf{a condition to ensure sufficient data collection by the logging policy $\pi_0$}).

\begin{condition} (Full Support)  \label{condition:full_support}
    The logging policy $\pi_0$ is said to have full support if $\pi_0(a|x) > 0, \forall x \in \mathcal{X}, \forall a \in \mathcal{A}.$ 
\end{condition}

Secondly, the \textbf{regression-based} approach estimates the q-function $q(x, a)$ using off-the-shelf supervised machine learning methods. It then transforms the estimated q-function $\hat{q}_\theta(x, a)$ into a decision-making policy, for example, by applying the softmax function as
\begin{align}
    \pi_\theta (a|x) = \frac{\exp(\hat{q}_\theta(x, a))}{\sum_{a'\in\calA} \exp(\hat{q}_\theta(x, a'))} \label{eq:reg_a}
\end{align}
This approach may fail due to bias issues resulting from the difficulty in accurately estimating the q-function $q(x,a)$. However, it can avoid the issue of high variance compared to the policy-based approach based on IPS and DR~\cite{jeunen2021pessimistic}.

Existing policy-based or regression-based methods perform effectively in identifying existing actions with high expected rewards~\cite{swaminathan2015batch,metelli2021subgaussian,saito2024potec}. However, these methods cannot learn to select new actions, even if some of the new actions have high expected rewards. This is because we do not observe any data about new actions in $\calD$, which severely violates \textit{full support} (Condition~\ref{condition:full_support}). Therefore, we need to first relax the typical condition of full support to effectively learn policies that can choose promising new actions.

\section{Related Work} \label{sec:related}
This section summarizes the key related work and their connections to our problem of OPL with new actions. A more thorough discussion on related work can be found in Appendix~\ref{app:related}.

Off-Policy Evaluation (OPE) and Learning (OPL)~\cite{dudik2014doubly, wang2017optimal, liu2018breaking, farajtabar2018more, su2019cab, su2020doubly, kallus2020optimal, metelli2021subgaussian, saito2022off, saito2023off} aim to evaluate and optimize decision-making policies using only historical logged data. Various methods and estimators have been developed in OPE to control the bias and variance of the estimation of the performance of new policies~\cite{su2020doubly, su2019cab, swaminathan2015counterfactual, swaminathan2015self, dudik2014doubly, farajtabar2018more, kallus2020optimal, kiyohara2022doubly, kiyohara2023off, metelli2021subgaussian, saito2021counterfactual, wang2017optimal, kiyohara2024off}. In OPL, these estimation techniques enable accurate estimation of the policy gradient~\cite{saito2024potec}. Unfortunately, most existing methods in OPL rely crucially on full support (Condition~\ref{condition:full_support}) and focus solely on selecting the most effective actions within those already present in the logged data~\cite{sachdeva2020off}. Thus, they cannot learn and give fair opportunities to new actions that are potentially available in many decision-making problems.

A setting particularly relevant to ours is OPE for slate bandits. It considers selecting a combinatorial action called \textit{slate} composed of multiple sub-actions. For example, in medical treatment scenarios, one may aim to select the optimal combination of drug doses to improve patient outcomes. While these systems often have access to extensive logged data, an accurate OPE is challenging due to the exponential increase in variance associated with slate-wise importance weighting~\citep{swaminathan2017off, su2020doubly, kiyohara2024off}. To address this challenge, several versions of \textit{PseudoInverse} (PI) estimators have been proposed~\citep{su2020doubly, swaminathan2017off, vlassis2021control}. They use slot-wise importance weights to relax the support condition and reduce the variance of typical estimators such as IPS. They have also been proven to enable unbiased OPE under the \textit{linearity} assumption on the q-function. Linearity requires that the q-function be linearly decomposable, ignoring interaction effects among different slots. While PI often outperforms IPS under linear reward structures, it introduces significant bias when the linearity assumption does not hold~\cite{kiyohara2024off,shimizu2024effective}. To effectively handle new actions in OPL by leveraging action features, we extend the idea of PI to estimate the policy gradient by relaxing the linearity assumption on the reward. This relaxation is crucial for dealing with the real-world non-linearities in reward functions during policy learning. It should be noted that the key contributions of our work are substantially different from those of existing studies for slate bandits~\cite{swaminathan2017off, kiyohara2024off, vlassis2021control, chaudhari2024distributional}. We focus on the problem of policy \textit{learning with new actions}. In contrast, the existing literature around slate OPE addresses only the problem of policy \textit{evaluation without considering new actions}.

Another setting similar to new actions is OPL with deficient actions~\cite{sachdeva2020off,felicioni2022off}, which refer to the actions that have zero probability of being selected under the logging policy for \textit{certain} contexts $x$. Rigorously, the set of deficient actions is written as $\{a \in \calA \mid \exists x \in \calX, \pi_0(a \mid x) = 0\}$. Deficient action is a broader concept that encompasses the new actions we consider as a (more challenging) special case. Due to this distinction, existing OPL approaches to deal with deficient actions~\cite{sachdeva2020off,felicioni2022off} cannot handle new actions.

\begin{figure}[t]
     \centering
     \includegraphics[width=0.75\linewidth]{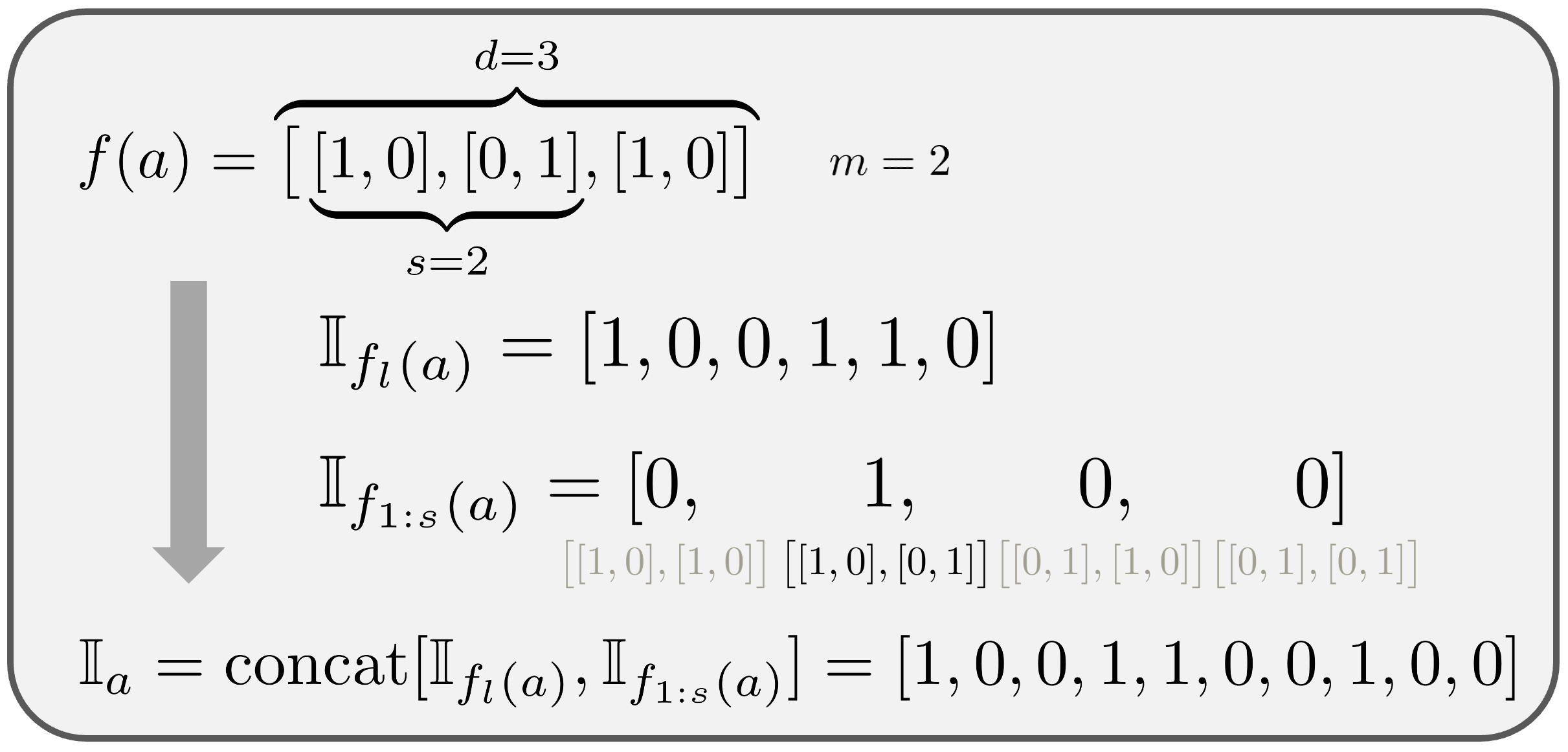} \vspace{-2mm}
    \caption{An example of converting action features into a vector with binary values by representing each dimension ($f_l(a)$) by respective one-hot vectors. The vector representing the independent effect of each dimension ($\mathbb{I}_{f_l(a)}$) is then concatenated with $\mathbb{I}_{f_{1:s}(a)}$, which represents interaction effects.} \label{fig:action_indicator} \vspace{-5mm}
\end{figure}

\section{The Proposed Approach}
The previous sections discussed the challenges associated with selecting new actions using existing OPL methods. To overcome, we propose a solution that leverages action features, which are often available in real-world decision-making problems~\citep{saito2022off,peng2023offline,felicioni2022off}.

\subsection{Formulation of OPL with Action Features}
We now represent action $a$ by the $d$-dimensional action features 
$$f(a) = (f_1(a), \dots, f_l(a), \dots, f_d(a)),$$ 
where $f_l(a) \in \mathcal{F}_l$ denotes the $l$-th dimension. In particular, we consider the setting where each action feature $f_l(a)$ is discrete and can be represented as a one-hot vector. For example, the genre feature such as `documentary', `romance', and `horror' are encoded as $[1, 0, 0]$, $[0, 1, 0]$, and $[0, 0, 1]$, respectively, as in Figure~\ref{fig:action_indicator}.

In OPL with action features, we can possibly address the presence of new actions by being able to represent them as combinations of already observed factors. Figure~\ref{fig:new_action} illustrates an example in the context of thumbnail selection with the 3-dimensional action features. The left side of the figure shows the existing action set, which includes thumbnails already considered by the logging policy. The right side introduces new thumbnails as new actions. For example, \textbf{new action 1} consists of "male and female characters," "title positioned at the bottom," and "large title size." We can see that we have already observed every individual feature of \textbf{new action 1} separately in the existing action set. \textbf{Despite this, existing OPL methods struggle to handle these new actions because the logged data does not include specific combinations of action features representing the new actions}. Therefore, we propose a novel method and algorithm that leverage these action features through a new estimator for the PG to identify effective new actions.

Before introducing our proposed approach, we rigorously introduce three key types of action spaces.
\begin{enumerate}
    \item \textbf{The space of \textit{all} actions}: The set of all possible combinations of action features:
    \[
    \calA := \{a \mid a=(f_1, \dots, f_d), \; \forall f_1 \in \calF_1, \dots, \forall f_d \in \calF_d\}.
    \]
    \item \textbf{The space of \textit{existing} actions}: The set of actions that have some positive probability under the logging policy:
    \[
    \calA_{existing} := \{a \in \calA \mid \exists x \in \calX, \, \pi_0(a | x) > 0\}.
    \]
    \item \textbf{The space of \textit{new} actions}: The set of actions that are not at all chosen under the logging policy:
    \[
    \calA_{new} := \{a \in \calA \mid \forall x \in \calX, \, \pi_0(a | x) = 0\}.
    \]
\end{enumerate}

These action spaces satisfy the following relations:
\[
    \calA_{existing} \cap \calA_{new} = \emptyset, \quad \calA_{existing} \cup \calA_{new} = \calA.
\]

In the typical formulation for OPL, the objective is to identify actions that maximize the expected reward from a set of \textit{existing} actions $\calA_{existing}$. In contrast, we consider a more general and realistic scenario, where the goal is to select actions that maximize the expected reward from $\calA$, which includes new actions $\calA_{new}$.

\subsection{The Proposed Algorithm: PONA}
\label{sec:pona}
To design a new algorithm for OPL with new actions, we first draw inspiration from the PI estimator initially proposed for OPE in slate bandits~\cite{swaminathan2017off} as described in Section~\ref{sec:related}, and provide a generalization. 

To introduce PI, we begin with formally defining an one-hot vector for each action feature as $\mathbb{I}_{f_l(a)} \in \mathbb{R}^{dm}$.\footnote{Here, we assume that each dimension of the action feature has the same number of possible values ($m=|\calF_l|$) for ease of exposition. However, this is not strictly necessary, and different dimensions can indeed have varying sizes in practice.} As shown in Figure~\ref{fig:action_indicator}, this is a flattened vector that represents the one-hot encoding of each action feature. Using this notation, we can first extend the PI estimator~\cite{swaminathan2017off} as an estimator for the PG as follows.
\begin{align}
    &\nabla_\theta \widehat{V}_{\mathrm{PI}}(\pi_\theta; \mathcal{D}) \notag \\ 
    &= \frac{1}{n}\sum_{i=1}^n \left(\sum_{a \in \calA}\pi_\theta(a|x_i) \nabla_\theta \log\pi_\theta(a|x_i) \red{\mathbb{I}_{f_l(a)}}^{T} \right)\Gamma_{\pi_0, x_i}^\dagger \red{\mathbb{I}_{f_l(a_i)}} r_i, \label{eq:PI}
\end{align}
where $\Gamma_{\pi_0, x} := \mathbb{E}_{\pi_0(a|x)}[\mathbb{I}_{f_l(a)} \mathbb{I}_{f_l(a)}^T|x]$, and $M^\dagger$ denotes the Moore-Penrose pseudoinverse of the matrix $M$.

The PI estimator introduced above relies on the following conditions (\ref{condition:independent_support} and \ref{condition:linearity}) to ensure unbiased estimation of the PG:
\begin{condition} (Independent Support) \label{condition:independent_support}
    The logging policy $\pi_0$ ensures independent support if $\pi_0(f_l|x) > 0, \forall x \in \mathcal{X}, \forall l \in [1, ..., d], \forall f_l \in \mathcal{F}_l$. Note that $\pi_0(f_l|x) = \sum_{a \in \calA: f_l(a) = f_l} \pi_0(a\,|\,x)$ is the marginal probability of observing $f_l$ under $\pi_0$.
\end{condition}

This condition requires the logging policy to independently support every dimension of the action features. The condition of Independent Support is weaker than the Full Support condition (Condition~\ref{condition:full_support}), which requires the logging policy to choose every combination of action features. When new actions exist, the Full Support condition cannot hold, but the Independent Support condition may still hold even in the presence of new actions. \textbf{Due to this relaxation of the support condition, the PI estimator can learn a new policy that chooses new actions}.

The next condition of PI is with regard to the q-function.

\begin{condition} (Linearity) \label{condition:linearity} 
For each context $x \in \calX$, there exists an (unknown) intrinsic reward vector $\phi_{x,l} \in \mathbb{R}^{dm}$ such that: 
\begin{align*} 
    q(x, a) = \sum_{l=1}^d q_l(x, f_l(a)) = \mathbb{I}_{f_l(a)}^T \phi_{x,l}.
\end{align*}
\end{condition} 

The linearity condition requires that the q-function be expressed as a linear combination of latent value functions for each dimension of the feature. \textbf{This condition essentially assumes no interaction effects between different dimensions of the action feature}. While extending the PI estimator to estimate the PG is straightforward as done in Eq.~\eqref{eq:PI}, the linearity condition rarely holds and introduces significant bias in PG estimation~\cite{kiyohara2024off,vlassis2021control}.

To overcome this critical limitation of the naive extension of PI, we first generalize Independent Support to account for local interactions of action features as follows.

\begin{condition} (Local Combination Support) \label{condition:local_combination_support}
    The logging policy $\pi_0$ is said to ensure local combination support if, for the the first $s$ dimensions of the action features, $\pi_0(f_{1:s}|x) > 0, \forall x \in \mathcal{X}, \forall f_{1:s} \in \prod_{j=1}^s\mathcal{F}_j$ as well as $\pi_0(f_l|x) > 0$ for the rest $s + 1 \le l \le d$.
\end{condition}

Note that $f_{1:s} = (f_1, ..., f_s)$ represents the first $s$ dimensions of the action features, where $1 \le s \le d$. The condition of Local Combination Support requires $f_{1:s}$ to be fully supported under the logging policy.\footnote{Note that we focus on the first $s$ dimensions just for ease of exposition, as the dimensions of the action feature can be arbitrarily reordered.} Condition~\ref{condition:local_combination_support} remains weaker than the original Full Support condition (Condition~\ref{condition:full_support}), because Full Support requires every dimension of the action features to be simultaneously supported (equivalent to Local Combination Support with $s=d$). When $s=1$, Local Combination Support reduces to Condition~\ref{condition:independent_support}. Along with this extension, we generalize the linearity condition for the q-function (Condition~\ref{condition:linearity}) as described below.

\begin{condition} (Local Linearity)  \label{condition:local_linearity_condition}
    Define a vector with binary values for representing the first $s$ dimensions of the action features as $\mathbb{I}_{f_{1:s}} \in \mathbb{R}^{m^s}$. The overall action indicator $\mathbb{I}_a$ is then defined as $\mathbb{I}_a := \text{concat}[\mathbb{I}_{f_l}, \mathbb{I}_{f_{1:s}}]$. We say that the q-function meets Local Linearity, if for each context $x \in \calX$, there exists an (unknown) intrinsic reward vector $\phi_x = \text{concat}[\phi_{x,l} \in \mathbb{R}^{dm}, \phi_{x,1:s} \in \mathbb{R}^{m^s}]$ such that:
\begin{align*}
    q(x, a) =  \underbrace{\sum_{l=1}^d q_l(x, f_l(a)) + q(x, f_{1:s}(a))}_{\mathbb{I}_{f_l(a)}^T \phi_{x, l} \quad +  \quad
\mathbb{I}_{f_{1:s}(a)}^T \phi_{x, 1:s} }  = \mathbb{I}_{a}^T \phi_{x}.
\end{align*}
\end{condition}

\textbf{The Local Linearity condition allows the first $s$ dimensions of the action features to have interaction effects with each other}. Therefore, this condition is weaker than linearity of PI (Condition~\ref{condition:linearity}), because linearity does not allow any interaction effects across different action features. When $s=1$, the condition of local linearity reduces to the linearity condition (Condition~\ref{condition:linearity}).

Building on these generalized conditions, we propose a new policy gradient estimator, called the Local Combination PseudoInverse (LCPI) estimator, generalizing the PI estimator~\cite{swaminathan2017off} as below.
\begin{align}
    &\nabla_\theta \widehat{V}_{\mathrm{LCPI}}(\pi_\theta; \mathcal{D}) \notag \\ 
    &= \frac{1}{n}\sum_{i=1}^n \left(\sum_{a \in \calA}\pi_\theta(a|x_i) \nabla_\theta \log\pi_\theta(a|x_i) \red{\mathbb{I}_a}^{T} \right)\Gamma_{\pi_0, x_i}^\dagger \red{\mathbb{I}_{a_i}} r_i,
\end{align}
where $\Gamma_{\pi_0, x} := \mathbb{E}_{\pi_0(a|x)}[\mathbb{I}_a \mathbb{I}_a^T|x]$.

LCPI satisfies unbiasedness against the PG $\nabt \trueV$ under Conditions~\ref{condition:local_combination_support} and \ref{condition:local_linearity_condition}, i.e., $\mE_{\calD}[\lcpi] = \nabt \trueV$.\footnote{Please see the appendix for the derivation.} This indicates that LCPI can estimate the reward of new actions while dealing with the interaction effects among action features (the original PI estimator cannot deal with the interaction effects).

Although LCPI is effective for selecting new actions than PI by dealing with the interaction effects, traditional policy- or regression-based methods may achieve better performance within existing actions. This is because they do not rely on any condition by focusing only on existing actions. Therefore, there is an interesting tradeoff between our LCPI and traditional methods. While traditional methods focus on identifying the optimal one within existing actions, LCPI is superior in choosing new actions based on local linearity. This interesting tradeoff leads us to finally develop the \textbf{Policy Optimization for New Actions (PONA)} algorithm, which integrates LCPI with the DR estimator\footnote{Note that PONA is general and can be combined with any existing estimator for the PG other than DR. We use DR as a part of our PONA algorithm because it is often the most effective method among traditional ones~\cite{saito2024potec}.} as follows:
\begin{align}
    &\nabla_\theta \widehat{V}_\mathrm{PONA}(\pi_\theta; \kappa, \calD)\notag \\  
    &= \kappa \cdot \nabla_\theta \hat{V}_\mathrm{LCPI}(\pi_\theta; \calD) + (1 - \kappa) \cdot \nabla_\theta \hat{V}_\mathrm{DR}(\pi_\theta; \calD),
\end{align}
where $\kappa \in [0, 1]$ is a hyperparameter that controls the trade-off between focusing on new actions and maximizing the overall policy value by prioritizing existing actions. A larger value of $\kappa$ increases the weight of the LCPI component, which leads to greater reliance on the local linearity condition and a stronger focus on optimizing new actions. However, this approach sacrifices the effectiveness of identifying the optimal actions within the set of existing actions. In contrast, a smaller value of $\kappa$ increases the weight of DR, reducing reliance on local linearity and shifting the focus toward optimizing existing actions. This results in a more conservative policy that rarely selects new actions. We can tune the value of $\kappa$ in a data-driven manner using a cross-validation procedure. Multiple formulations of cross-validation can be considered for tuning $\kappa$, depending on the specific objectives. For instance, one can simply maximize the policy value $V(\pi)$ on the validation set when tuning $\kappa$. Alternatively, it is possible to impose constraints on the proportions of new actions under the learned policy, allowing control over how aggressive or conservative the new policy should be, as follows.
\begin{align}
    \max_\kappa \ \hat{V}(\pi_{\theta, \kappa}; \mathcal{D}) \quad 
    \text{s.t.} \ \rho_L \leq \mathbb{E}_{p(x)}[ \sum_{a \in \mathcal{A}_{new}} \pi_{\theta, \kappa}(a | x)] \leq \rho_U, \
\label{eq:tuning_kappa}
\end{align}
where $\rho_L$ and $\rho_U$ define the desired range for the proportion of new actions. In the next section, we will empirically demonstrate that, by performing this cross-validation procedure and changing the values of $\rho_L$ and $\rho_U$, we can flexibly control the proportion between existing and new actions under the learned policy. Note that, when performing cross-validation, we can estimate the policy value $\hat{V}(\pi_{\theta, \kappa})$ of the policy induced by $\kappa$ by applying off-the-shelf OPE estimators such as IPS and DR in the validation set~\cite{saito2024hyperparameter}.

\begin{figure*}[t]
    \includegraphics[scale=0.34]{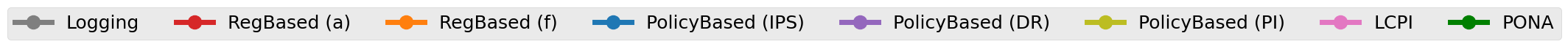}
    \includegraphics[width=0.725\linewidth]{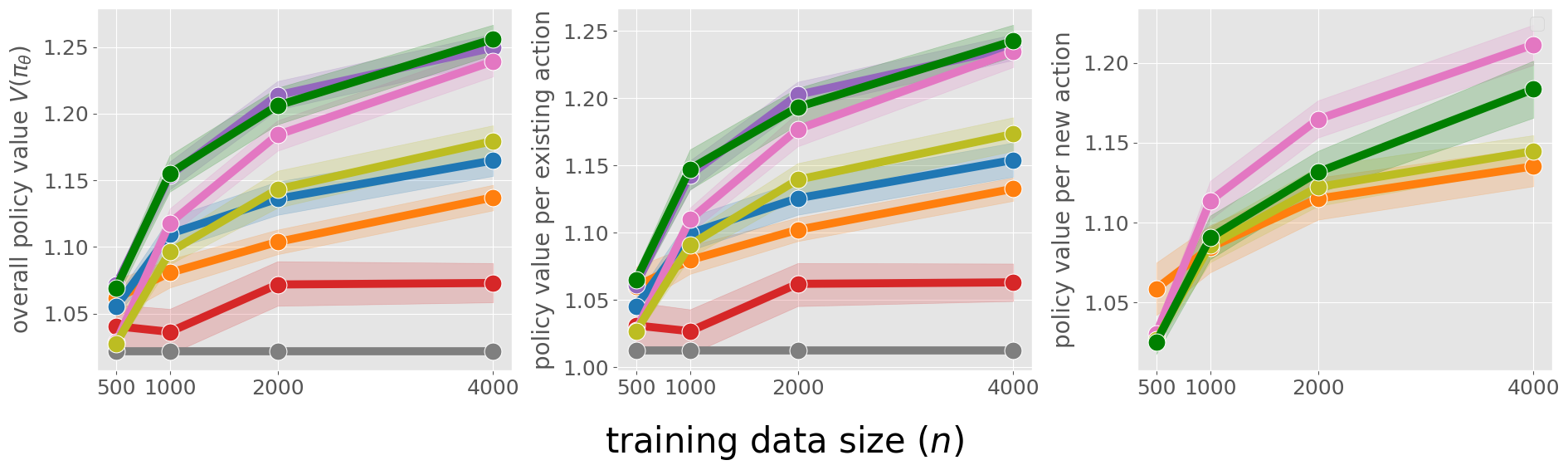} \vspace{-2mm}
    \centering
    \caption{Comparisons of the overall policy value, policy value per existing action, and policy value per new action with varying training data sizes ($n$). Note that the metrics are normalized by those of the uniform random policy.}
    \centering
    \label{fig:n_data_per}
\end{figure*}
\begin{figure*}[t]
    \includegraphics[width=0.725\linewidth]{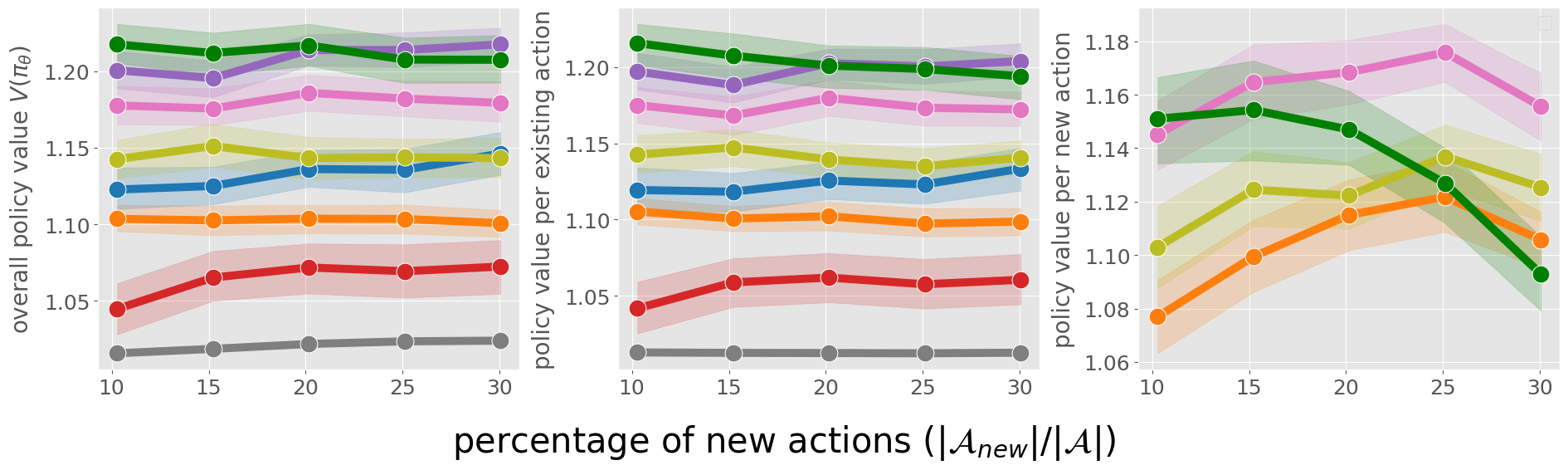} \vspace{-2mm}
    \centering
    \caption{Comparisons of the overall policy value, policy value per existing action, and policy value per new action with varying percentages of new actions ($|\calA_{new}|/|\calA|$). Note that the metrics are normalized by those of the uniform random policy.}
    \centering
    \label{fig:n_actions_per}
\end{figure*}
\begin{figure*}[t]
    \includegraphics[width=0.725\linewidth]{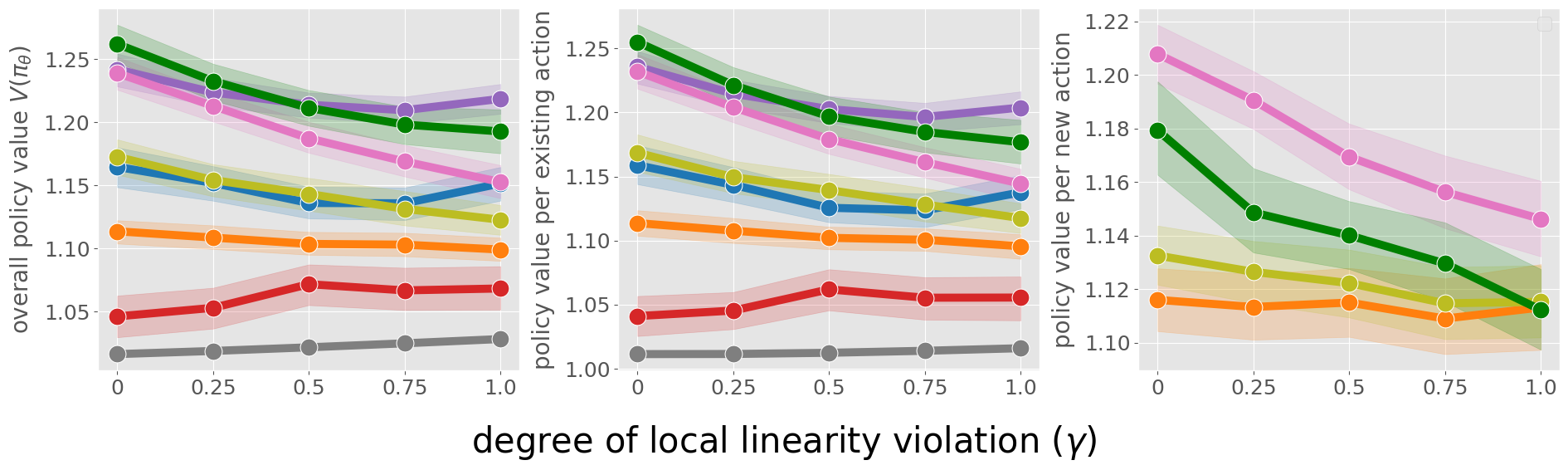} \vspace{-2mm}
    \centering
    \caption{Comparisons of the overall policy value, policy value per existing action, and policy value per new action with varying degrees of local linearity violation ($\gamma$). Note that the metrics are normalized by those of the uniform random policy.}
    \centering
    \label{fig:gamma_per}
\end{figure*}

\begin{figure*}[t]
    \includegraphics[scale=0.37]{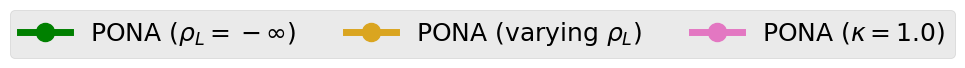}
    \includegraphics[width=0.725\linewidth]{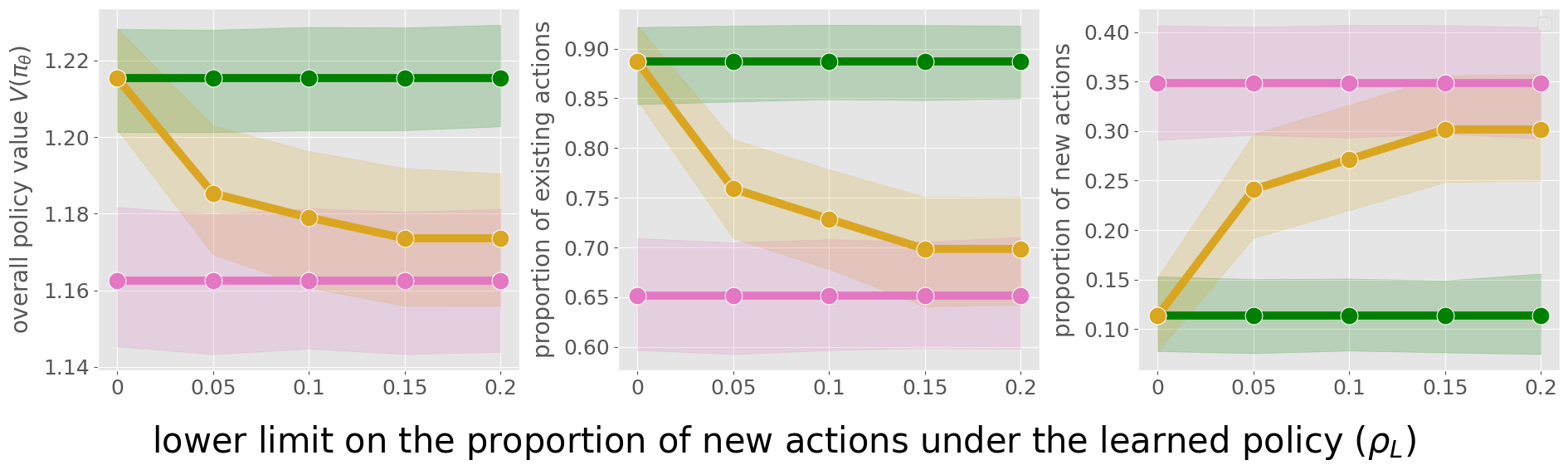}
    \centering
    \caption{Comparisons of the overall policy value, proportion of existing actions, and proportion of new actions under the learned policy with varying lower limits on the proportion of new actions ($\rho_L$ in Eq.~\eqref{eq:tuning_kappa}).}
    \centering
    \label{fig:rho}
\end{figure*}

\section{Empirical Evaluation} 
This section empirically evaluates our proposed method, PONA, and its special case (LCPI) on synthetic and real-world datasets.

\subsection{Synthetic Data} 
We first create synthetic data to evaluate policy learning algorithms and their ability to select effective new actions based on the ground-truth policy value for varying experiment configurations.

We begin by sampling context vectors $x$ from a normal distribution. The action features span $d=5$ dimensions, with each feature having $m=3$ possible values. The total number of possible actions is thus $|\mathcal{A}| = 3^5 = 243$.

We then define the synthetic q-function as follows: 
\begin{align*} 
    q(x, a) = \sum_{l=1}^d \underbrace{q_l(x, f_l(a))}_{:=x^T M_{f_l(a)}} + \underbrace{q_{1:s}(x, f_{1:s}(a))}_{:=x^T M_{f_{1:s}(a)}} + \gamma \cdot \underbrace{q_{1:d}(x, f_{1:d}(a))}_{:=x^T M_{f_{1:d}(a)}}, 
\end{align*} 
where $\gamma$ is a parameter that controls the contribution of the last term $q_{1:d}(x, f_{1:d}(a))$. $M_{f_l}$, $M_{f_{1:s}}$, and $M_{f_{1:d}}$ are parameter matrices sampled from a uniform distribution. The first term of the q-function, $q_l(x, f_l(a))$, represents the independent effect of each dimension of the action feature. The second term, $q_{1:s}(x, f_{1:s}(a))$, captures the interaction effects within the first $s$ dimensions of the action feature. Most methods, including IPS, DR, and ours, can handle the first two terms because they do not involve interaction effects beyond the first $s$ dimensions. Only the original PI method~\cite{swaminathan2017off} cannot estimate the second term due to its reliance on the linearity condition. The last term of the q-function, $q_{1:d}(x, f_{1:d}(a))$, accounts for the interaction effects among all dimensions of the action feature. The proposed methods cannot estimate the last term without bias due to their reliance on local linearity, while existing methods can achieve unbiased estimation of the q-function within existing actions regardless of the interaction effects. By adjusting the $\gamma$ parameter, we control the degree of the condition’s violation and evaluate the robustness of our methods under these violations.

Next, we define the logging policy $\pi_0$ by applying the softmax function to the expected reward function $q(x, a)$ within the set of existing actions as follows:
\begin{align*}
    \pi_0 (a|x) := \left\{
    \begin{aligned}
          &\frac{\exp(\beta\cdot q(x,a))}{\sum_{a' \in \mathcal{A}}\exp(\beta\cdot q(x,a'))} & \text{if} \quad a \in \mathcal{A}_{existing}\\
          &0 &  \text{if} \quad 
          a \in \mathcal{A}_{new}\\
    \end{aligned}
    \right.
\end{align*}
where $\beta$ is a parameter that adjusts the trade-off between optimality and entropy in the logging policy. We set $\beta = 0.05$ throughout our experiments. 
Note that the above formulation of the logging policy ensures that the new actions in $\mathcal{A}_{new}$ have zero probability of being selected by the logging policy for any context $x$.

\paragraph{\textbf{Compared Methods.}}
We compare PONA and LCPI with the logging policy $\pi_0$, the regression-based method using action index (RegBased ($a$)), the regression-based method using action feature (RegBased ($f$)), policy-based method w/ IPS (PolicyBased (IPS)), policy-based method w/ DR (PolicyBased (DR)), and policy-based method w/ PI (PolicyBased (PI)). We report RegBased ($f$) as a baseline method that estimates the q-function using action features as input. It then picks the best action according to the estimated reward function $\hat{q}_\theta(x, f_{1:d}(a))$. RegBased ($f$) differs from RegBased ($a$) in that it may select new actions. To tune the hyperparameter $\kappa$ for PONA, we perform a grid search over the range $[0, 0.25, 0.5, 0.75, 1.0]$, selecting the best value in terms of optimizing the policy value $V(\pi)$ on validation data.

\begin{figure*}[t]
    \includegraphics[scale=0.34]{image/legend_one_colum.png}
    \includegraphics[width=0.725\linewidth]{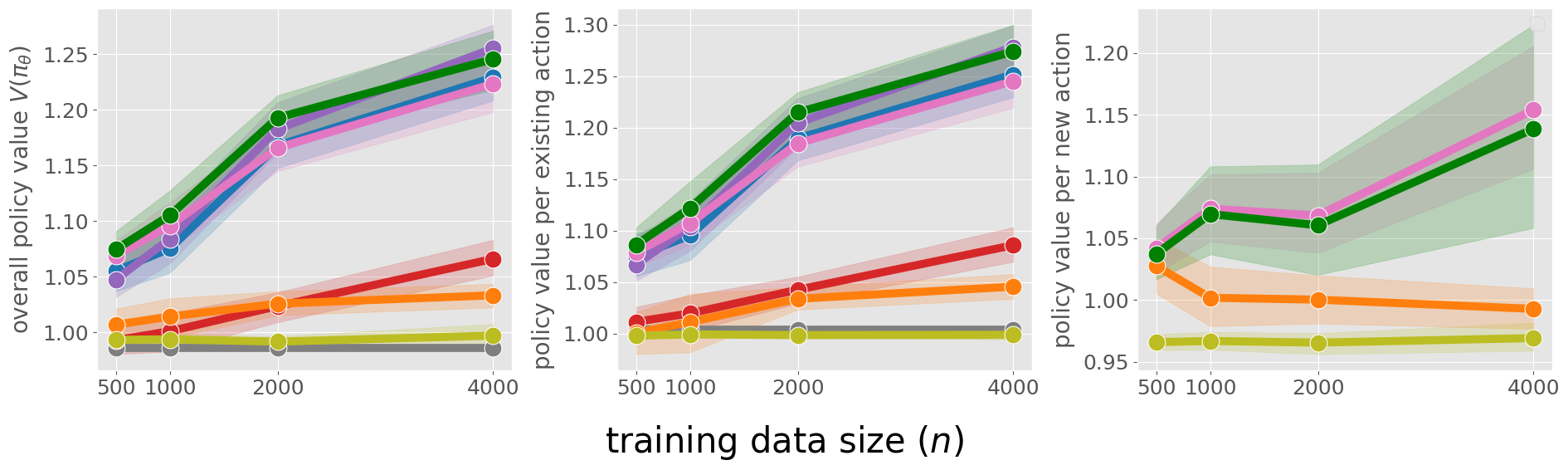} \vspace{-2mm}
    \centering
    \caption{Comparisons of the overall policy value, policy value per existing action, and policy value per new action with varying training data sizes ($n$) on the KuaiRec dataset. Note that the metrics are normalized by those of the uniform random policy.}
    \centering
    \label{fig:real_n_data}
\end{figure*}
\begin{figure*}[t]
    \includegraphics[width=0.725\linewidth]{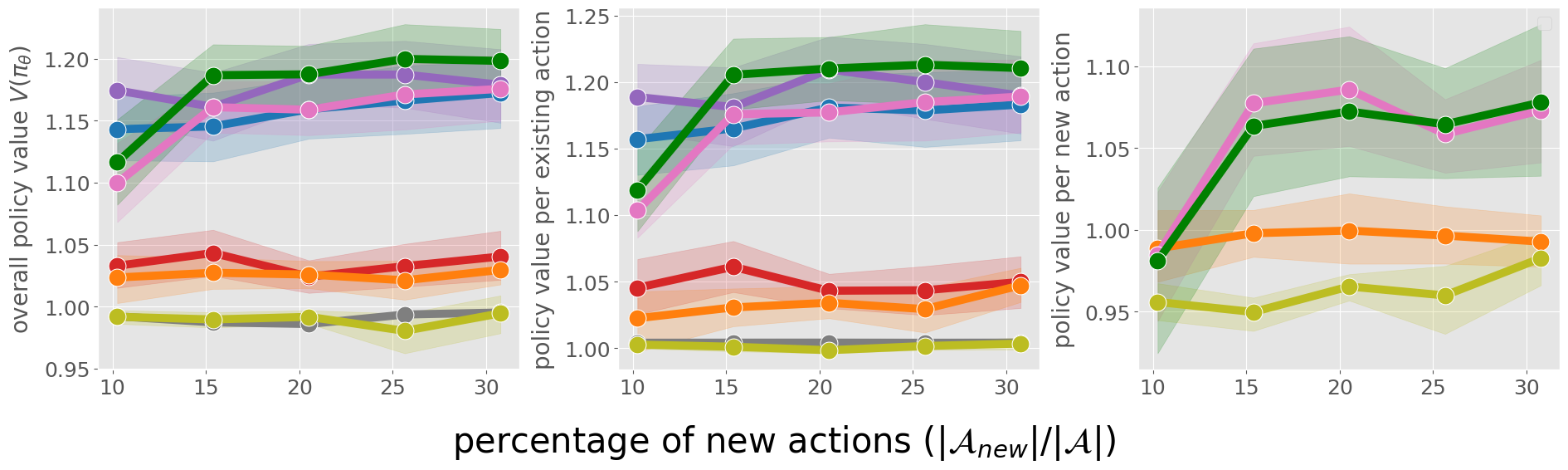} \vspace{-2mm}
    \centering
    \caption{Comparisons of the overall policy value, policy value per existing action, and policy value per new action with varying percentages of new actions ($|\calA_{new}|/|\calA|$) on the KuaiRec dataset. Note that the metrics are normalized by those of the uniform random policy.}
    \centering
    \label{fig:real_n_actions}
\end{figure*}

\paragraph{\textbf{Result.}}
We report the following metrics of policies learned by each OPL method, averaged across 200 simulations.
\begin{itemize}
    \item \textbf{overall policy value}: $ \mE[\sum_{a \in \calA} \pi_\theta(a|x) q(x, a)] $
    \item \textbf{policy value per existing action}: $ \frac{\mE[\sum_{a \in \calA_{existing}} \pi_\theta(a|x) q(x, a)]}{\mE[\sum_{a \in \calA_{existing}} \pi_\theta(a|x)]} $
    \item \textbf{policy value per new action}: $ \frac{\mE_{p(x)}[\sum_{a \in \calA_{new}} \pi_\theta(a|x) q(x, a)]}{\mE[\sum_{a \in \calA_{new}} \pi_\theta(a|x)]} $
\end{itemize}
\textbf{Overall policy value} measures the overall effectiveness of the learned policy. \textbf{Policy value per existing action} evaluates the quality of selecting existing actions, while \textbf{policy value per new action} assesses that of selection within new actions. \textbf{Note that the logging policy $\pi_0$, RegBased ($a$), PolicyBased (IPS), and PolicyBased (DR) do not choose new actions at all, and thus their results do not appear in the figures of \underbar{\textbf{policy value per new action}}}. 

\paragraph{\textbf{How does PONA perform with varying sizes of training data?}}
First, we compare the OPL methods using varying sizes of training logged data $(n \in \{500, 1000, 2000, 4000\})$, as shown in Figure \ref{fig:n_data_per}. The figure shows that most methods, including PONA and LCPI, improve in overall policy value, policy value per existing action, and policy value per new action as the training data size increases, which aligns with expectations. Notably, LCPI performs particularly well in terms of the selection of new actions when the training data size is relatively large ($n = 2000, 4000$). However, it underperforms PolicyBased (DR) in terms of overall policy value. In contrast, PONA combines the strengths of LCPI and PolicyBased (DR), maintaining a policy value per new action comparable to RegBased ($f$) while matching PolicyBased (DR) in terms of overall policy value. Considering that existing methods, including PolicyBased (DR), cannot select new actions at all, it is noteworthy that \textbf{PONA not only selects new actions and gives fair opportunities to them as effectively as RegBased ($f$) but also achieves overall performance competitive with PolicyBased (DR).}

\paragraph{\textbf{How does PONA perform with varying numbers of new actions?}}
Next, Figure \ref{fig:n_actions_per} illustrates how the OPL algorithms perform as the percentage of new actions, $100 \cdot\frac{|\calA_{new}|}{|\calA|}$, increases. As the percentage of new actions grows, the number of existing actions decreases. The figure shows that methods focusing exclusively on existing actions (RegBased ($a$), PolicyBased (IPS), PolicyBased (DR)) exhibit a slight improvement in policy value for existing actions as the percentage of new actions increases. This is because, with fewer existing actions, it becomes easier to identify the optimal action within the reduced set. Nevertheless, PONA consistently achieves the same or slightly better overall policy value compared to PolicyBased (DR), while also selecting new actions for any percentage of new actions. In comparison, PolicyBased (DR) and other traditional methods never select new actions, even as the percentage of new actions increases. PolicyBased (PI) is mostly worse than PONA and LCPI in every metric due to its reliance on linearity, which is severely violated in our experimental environment.

\paragraph{\textbf{How does PONA perform when Condition~\ref{condition:local_linearity_condition} is violated?}}
We next evaluate the robustness of the proposed methods against violations of their key condition, namely, local linearity. To test this, we increase the value of $\gamma$ in the definition of the synthetic reward function in the x-axis of Figure~\ref{fig:gamma_per}. The figure demonstrates that as $\gamma$ increases, the violations of the local linearity condition become more pronounced, leading to a gradual decrease in the overall policy value of LCPI. In contrast, existing methods that do not rely on any assumptions about the q-function exhibit little change in policy value with varying values of $\gamma$, which is reasonable. Most interestingly, PONA, which also relies on local linearity, is more robust to the condition's violation than LCPI. This occurs because its data-driven tuning process in Eq.~\eqref{eq:tuning_kappa} adjusts its hyperparameter $\kappa$ to assign greater weight to the DR component as the violation becomes more severe. This adjustment adaptively maintains the policy value for existing actions, remaining a high overall policy value. Even in scenarios where the ability to learn new actions is at its weakest, PONA performs comparably to RegBased ($f$) for new actions, significantly outperforming PolicyBased (DR) and other baseline methods in selecting new actions. These results highlight PONA's ability to maintain a high policy value and effectively learn new actions, even when its key condition is severely violated

\paragraph{\textbf{How does PONA perform when varying the constraints about the proportions of new actions during parameter tuning?}}
In the previous experiments, we did not impose any constraints on the behavior of the learned policy when tuning the weight parameter $\kappa$ of PONA (i.e., always setting $\rho_L = -\infty$ and $\rho_U = \infty$ in Eq.~\eqref{eq:tuning_kappa}). As a result, the tuning procedure selected the $\kappa$ value that simply maximized the overall policy value in the validation data.

In Figure~\ref{fig:rho}, we report the overall policy value and the proportions of existing and new actions under the learned policies for varying lower limit $\rho_L$ in Eq.~\eqref{eq:tuning_kappa}, while keeping $\rho_U = \infty$.\footnote{The proportions of existing and new actions are respectively defined as $\mE_{p(x)} [\sum_{a\in\calA_{existing}} \pi_{\theta}(a|x)]$ and $\mE_{p(x)} [\sum_{a\in\calA_{new}} \pi_{\theta}(a|x)]$.} Note that we include the metrics for PONA ($\rho_L=-\infty$) and PONA ($\kappa=1.0$) as references in the figure. This experiment allows us to investigate how well PONA’s behavior can be controlled through the tuning of $\kappa$. The right plot in Figure~\ref{fig:rho} demonstrates that, as the lower limit on the proportion of new actions becomes larger, the proportion of new actions under \textbf{PONA (with varying $\rho_L$)} increases. This result shows that we can control the behavior and aggressiveness of the learned policy by appropriately formulating the tuning process of the weight parameter $\kappa$. It is also reasonable that the overall policy value of PONA decreases with larger $\rho_L$, as this increases the reliance of PONA on the local linearity condition.

\subsection{Real-World Data}
To evaluate the real-world applicability of PONA, we finally assess its performance on the KuaiRec dataset~\cite{gao2022kuairec},\footnote{https://kuairec.com/} which are derived from the recommendation logs of a video-sharing mobile app. This dataset includes a matrix with 1,411 users and 3,327 videos, where nearly 100\% of the rewards are observable. Consequently, we can directly use the observed rewards to conduct experiments on OPL, which makes this dataset the most suitable real-world public data for our experiments. We use user information as the context $x$ and video IDs as the actions $a$. We then rely on video tags and video categories to construct action features $f(a)$. More specifically, the video categories follow a three-level hierarchical structure. Combining the video tags and categories creates a four-dimensional discrete action feature space. To reduce complexity, we extract 15 factors for each action feature, which results in $|\calA|=117$. We use the watch ratio, defined as the play duration divided by the video duration, as the reward $r$. Following the synthetic experiment, we use the softmax function to define the logging policy with $\beta = 0.05$.

\paragraph{\textbf{Results.}} We compare the overall policy values learned by each OPL method across 50 simulations with different random seeds. All other settings match those of the synthetic experiments. Figures~\ref{fig:real_n_data} and~\ref{fig:real_n_actions} present the results with varying training data sizes and percentages of new actions. The results demonstrate trends similar to those observed in the synthetic experiments. PONA achieves the overall policy value comparable to PolicyBased (DR). PONA also successfully identifies promising new actions and performs much better than RegBased ($f$) in terms of \textbf{policy value per new action}. Note that typical methods including PolicyBased (DR) did not choose new actions at all on the real data as well. RegBased ($f$) and PolicyBased (PI) perform equally or worse compared to the uniform random policy in selecting new actions. This is because the real-world dataset contains a larger number of values for each action feature, making it more challenging to learn the rewards of new actions via mere regression or based on linearity.

\section{Conclusion}
In this paper, we tackled the challenge of off-policy learning (OPL) with new actions. Existing methods cannot choose new actions at all simply because we do not observe any data about them. We first proposed the Local Combination PseudoInverse (LCPI) estimator for the policy gradient, which relaxes the restrictive conditions of the PseudoInverse method by accounting for interactions among action features, enabling the more effective selection of new actions. To balance the strengths of LCPI for new actions and Doubly Robust for existing actions, we also introduced the Policy Optimization for Effective New Actions (PONA) algorithm. By integrating these estimators through a weight parameter, PONA efficiently selects new actions while maintaining strong overall performance.

\bibliographystyle{ACM-Reference-Format}
\balance
\bibliography{ref}

\appendix

\onecolumn
\allowdisplaybreaks
\raggedbottom

\section{Extended Discussion on Related Work} \label{app:related}
This section discuss the related work more comprehensively than Section~\ref{sec:related}.

\paragraph{\textbf{Off-Policy Evaluation and Learning.}}
Off-Policy Evaluation (OPE) and Learning (OPL)~\cite{dudik2014doubly, wang2017optimal, liu2018breaking, farajtabar2018more, su2019cab, su2020doubly, kallus2020optimal, metelli2021subgaussian, saito2022off, saito2023off} aim to evaluate and optimize decision-making policies using only historical logged data, without the need for direct testing in an online environment. Various methods and estimators have been developed in OPE to control the bias and variance of the estimation of the performance of new policies~\cite{su2020doubly, su2019cab, swaminathan2015counterfactual, swaminathan2015self, dudik2014doubly, farajtabar2018more, kallus2020optimal, kiyohara2022doubly, kiyohara2023off, metelli2021subgaussian, saito2021counterfactual, wang2017optimal, kiyohara2024off}. These advancements in OPE methodology have led to highly accurate estimation of the policy value in the standard setting without new actions~\cite{uehara2022review,saito2021evaluating,saito2021open}. In OPL, these estimation techniques enable accurate estimation of the policy gradient and enhance the resulting effectiveness of policy learning from only logged data~\cite{saito2024potec}. However, most existing methods in OPL rely crucially on the full support condition (Condition~\ref{condition:full_support}) and focus solely on selecting the most effective actions within those already present in the logged data~\cite{sachdeva2020off}. Therefore, they cannot deal with new actions that are potentially available in many decision-making problems and severely violate full support. In this paper, we propose a novel formulation and algorithm that leverage action features to address OPL with new actions. In particular, our algorithm can identify and choose effective new actions without sacrificing the overall policy performance, while existing methods cannot handle new actions even with the availability of the action features.

\paragraph{\textbf{OPE for Slate Bandits.}}
In this study, we address the challenge of incorporating new actions by considering OPL with action features. Specifically, we tackle realistic scenarios where each action is characterized by multi-dimensional action features, and a single reward is observed for each action. By mapping actions to slates and features of an action to slots of a slate, we can establish a connection between our problem with action features and OPE for slate bandits, initially explored by~\citet{swaminathan2017off}.

Slate bandits consider selecting a combinatorial action called \textit{slate} composed of multiple sub-actions. For example, in medical treatment scenarios, one may aim to select the optimal combination of drug doses to improve patient outcomes. While these systems often have access to extensive logged data, an accurate OPE remains challenging due to the exponential increase in variance associated with slate-wise importance weighting~\citep{swaminathan2017off, su2020doubly, kiyohara2024off}. To address this challenge, several versions of \textit{PseudoInverse} (PI) estimators have been proposed~\citep{su2020doubly, swaminathan2017off, vlassis2021control}. They use slot-wise importance weights to relax the support condition and reduce the variance of typical estimators such as IPS. They have also been proven to enable unbiased OPE under the \textit{linearity} assumption on the q-function. Linearity requires that the q-function be linearly decomposable, ignoring interaction effects among different slots. While PI often outperforms IPS under linear reward structures, it introduces significant bias when the linearity assumption does not hold~\cite{kiyohara2024off,shimizu2024effective}.

To effectively handle new actions in OPL by leveraging action features, we extend the idea of PI to estimate the policy gradient by relaxing the linearity assumption on the reward. This relaxation is crucial for dealing with the real-world non-linearities in reward functions during policy learning. It should be noted that the key contributions of our work are substantially different from those of existing studies for slate bandits~\cite{swaminathan2017off, kiyohara2024off, vlassis2021control, chaudhari2024distributional}. We focus on the problem of policy \textit{learning with new actions}. In contrast, the existing literature around slate OPE addresses only the problem of policy \textit{evaluation without considering new actions}.

\paragraph{\textbf{OPL with Deficient Actions.}}
Another setting similar to new actions is OPL with deficient actions~\cite{sachdeva2020off,felicioni2022off}, which refer to the actions that have zero probability of being selected under the logging policy for \textit{certain} contexts $x$. Rigorously, the set of deficient actions is written as $\{a \in \calA \mid \exists x \in \calX, \pi_0(a \mid x) = 0\}$. Deficient action is a broader concept that encompasses the new actions we consider as a special case. Specifically, a deficient action $a$ may still be observed for some contexts $x$ under the logging policy $\pi_0$, whereas new actions are never observed for any context $x$. This distinction becomes significant when estimating the q-function, $\hat{q}(x, a)$. The existence of deficient actions still allows for learning the q-function $q(x, a)$ because we observe every action somewhere in the logged data. However, with the presence of new actions, it becomes extremely challenging to estimate the q-function because we do not observe even a single data point for such actions. Due to this distinction, existing OPL approaches to deal with deficient actions~\cite{sachdeva2020off,felicioni2022off} cannot handle new actions. This motivates us to develop new methods specifically dealing with new actions without sacrificing the overall quality of policy learning.

\section{Omitted Proofs}
Here, we provide the derivations and proofs that are omitted in the main text.

\section{Proof of Unbiasedness of LCPI}

We provide a detailed proof that LCPI is unbiased under Conditions~\ref{condition:local_linearity_condition} and \ref{condition:local_combination_support}, mirroring the structure of Proposition~1 in \cite{swaminathan2017off} while incorporating our own notation. We begin with a key lemma and then use it to establish the final result. To prove this, let us define the following action sets:
\begin{align*}
\calA_{\pi_0}(x) &:= \{a \in \calA \mid \pi_0(a|x) > 0 \}, \\
\calA_{\text{LCS}}(x) &:= \{a \in \calA \mid \pi_0(f_{1:s}(a)|x) > 0, \pi_0(f_l(a)|x) > 0, \forall l \in [1, \ldots, d] \}.
\end{align*}

\begin{lemma}\label{lem:slatevalue}
If Condition~\ref{condition:local_linearity_condition} holds and $a \in \calA_{\text{LCS}}(x)$, then $q(x, a) = \mathbb{I}_a^{T} \Gamma_{\pi_0, x}^\dagger \theta_{\pi_0, x}$, where $\theta_{\pi_0, x} := \mathbb{E}_{\pi_0(a|x)}\left[\mathbb{I}_a q(x, a) \right]$.
\end{lemma}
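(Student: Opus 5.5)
Under Condition~\ref{condition:local_linearity_condition}, $q(x,a)=\mathbb{I}_a^T\phi_x$ for every $a$. Substituting into $\theta_{\pi_0,x}$ gives
\[
\theta_{\pi_0,x}=\mathbb{E}_{\pi_0(a|x)}[\mathbb{I}_a\mathbb{I}_a^T]\,\phi_x=\Gamma_{\pi_0,x}\phi_x .
\]
Hence $\mathbb{I}_a^T\Gamma_{\pi_0,x}^\dagger\theta_{\pi_0,x}=\mathbb{I}_a^T\Gamma^\dagger\Gamma\,\phi_x$, where $\Gamma:=\Gamma_{\pi_0,x}$. The matrix $\Gamma$ is symmetric positive semidefinite, so $P:=\Gamma^\dagger\Gamma$ is the orthogonal projector onto $\mathrm{range}(\Gamma)=\mathrm{null}(\Gamma)^\perp$. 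The claim therefore reduces to showing $P\mathbb{I}_a=\mathbb{I}_a$, i.e.\ $\mathbb{I}_a\in\mathrm{range}(\Gamma)$. Once that holds, $\mathbb{I}_a^TP\phi_x=(P\mathbb{I}_a)^T\phi_x=\mathbb{I}_a^T\phi_x=q(x,a)$.

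To show $\mathbb{I}_a\in\mathrm{null}(\Gamma)^\perp$, take any $v\in\mathrm{null}(\Gamma)$. Then
\[
0=v^T\Gamma v=\sum_{a'}\pi_0(a'|x)\,(v^T\mathbb{I}_{a'})^2,
\]
so $v^T\mathbb{I}_{a'}=0$ for every $a'\in\calA_{\pi_0}(x)$. The task is to deduce $v^T\mathbb{I}_a=0$ for the given $a\in\calA_{\text{LCS}}(x)$. This is the main step, because $a$ itself may be a new action with $\pi_0(a|x)=0$.

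\textbf{Step 1 (coordinate setup).} Write $v=(v_{l,f})_{l\le d,\,f\in\calF_l}$ together with $(w_g)_{g\in\prod_{j\le s}\calF_j}$, matching the structure of $\mathbb{I}_a=\text{concat}[\mathbb{I}_{f_l},\mathbb{I}_{f_{1:s}}]$. Then
\[
v^T\mathbb{I}_{a'}=\sum_{l=1}^d v_{l,f_l(a')}+w_{f_{1:s}(a')} .
\]
Merge the first $s$ blocks with the interaction block into a single function $u(g):=\sum_{l\le s}v_{l,g_l}+w_g$ of $g=f_{1:s}$. This gives the additive form
\[
v^T\mathbb{I}_{a'}=u(f_{1:s}(a'))+\sum_{l>s}v_{l,f_l(a')},
\]
which vanishes on the support $\calA_{\pi_0}(x)$.

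\textbf{Step 2 (transfer to the new action).} I would argue as in Swaminathan et al.'s Proposition~1: the additive function vanishes on the support, and each "slot" value of $a$ (the combination $f_{1:s}(a)$ and each $f_l(a)$ for $l>s$) appears in some supported action. Concretely, replace components of $a$ one slot at a time by those of supported actions, and use differences of the vanishing identity to show that each slot-contribution equals a common constant. These constants must sum to zero, which yields $v^T\mathbb{I}_a=0$.

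I expect this transfer step to be the delicate one. Marginal (local-combination) support of each slot does not by itself guarantee that the needed "swap" actions lie in the support. So I would either adopt the same implicit structural assumption as in \cite{swaminathan2017off}, namely that the support is rich enough that the null-space vectors are exactly the gauge shifts (constants $c_l$ with $\sum_l c_l=0$, absorbed between blocks), or state this explicitly as what $a\in\calA_{\text{LCS}}(x)$ is taken to mean. Under that characterization every $v\in\mathrm{null}(\Gamma)$ gives $v^T\mathbb{I}_{a}=\sum_l c_l=0$ for all $a$, which completes the proof.
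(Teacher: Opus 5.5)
\textbf{Gap in Step~2.} Your reduction is the same as the paper's. Its $\phi_x^\star=(I-\Pi)\phi_x$ with $\Gamma_{\pi_0,x}^\dagger\theta_{\pi_0,x}=\phi_x^\star$ is your $\Gamma^\dagger\Gamma\phi_x$, and both arguments hinge on $\mathbb{I}_a\perp\mathrm{Null}(\Gamma_{\pi_0,x})$. The paper settles this in one line by asserting $\{v:\mathbb{I}_{a'}^Tv=0\ \forall a'\in\calA_{\pi_0}(x)\}=\{v:\mathbb{I}_{a'}^Tv=0\ \forall a'\in\calA_{\text{LCS}}(x)\}$. Your Step~2 is the corresponding point, and it is where your proposal stops being a proof. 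Your suspicion is correct, though: the step cannot be completed, because that equality---and the lemma itself---fails under Condition~\ref{condition:local_combination_support} alone.

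Here is a counterexample. Take $d=2$, $m=2$, $s=1$, and $\pi_0(\cdot|x)$ uniform on $\{(1,1),(2,2)\}$. Every feature value has positive marginal probability, so the new action $(1,2)$ lies in $\calA_{\text{LCS}}(x)$. Let $q(x,a)$ equal $1$ if $f_1(a)=1$ and $0$ otherwise, and define $q'$ the same way with $f_2$ in place of $f_1$. Both are linear, hence locally linear. They agree on the support (value $1$ at $(1,1)$, $0$ at $(2,2)$), so they induce the same $\theta_{\pi_0,x}$ and the same right-hand side $\mathbb{I}_{(1,2)}^T\Gamma_{\pi_0,x}^\dagger\theta_{\pi_0,x}$; a direct computation gives $(2q(x,(1,1))+q(x,(2,2)))/3=2/3$. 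Yet $q(x,(1,2))=1\neq0=q'(x,(1,2))$. In null-space terms, $v=e_{1,1}-e_{2,1}$ (unit vectors on the coordinates of $f_1=1$ and $f_2=1$) annihilates $\mathbb{I}_{(1,1)}$ and $\mathbb{I}_{(2,2)}$ but has $v^T\mathbb{I}_{(1,2)}=1$. The one-slot-at-a-time swaps you describe would need the hybrid $(1,2)$ or $(2,1)$ to be supported, and neither is.

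\textbf{What is needed.} In your proposal and in the paper alike, the missing piece is a hypothesis, not an argument. The right one is $\mathbb{I}_a\in\mathrm{span}\{\mathbb{I}_{a'}:a'\in\calA_{\pi_0}(x)\}$. This span equals $\mathrm{range}(\Gamma_{\pi_0,x})$, because $\Gamma_{\pi_0,x}=\sum_{a'}\pi_0(a'|x)\,\mathbb{I}_{a'}\mathbb{I}_{a'}^T$ has positive weights exactly on the support. The hypothesis is also necessary for the identity to hold for every locally linear $q$: if some $v\in\mathrm{Null}(\Gamma_{\pi_0,x})$ has $v^T\mathbb{I}_a\neq0$, then $\phi_x$ and $\phi_x+v$ yield the same $\theta_{\pi_0,x}$ but different values of $q(x,a)$. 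Under this hypothesis your first paragraph is already a complete proof, and Step~2 is unnecessary.

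Your ``gauge-shift'' assumption (the supported indicators span the same space as all indicators) is a valid sufficient condition, and it makes the conclusion hold for every $a\in\calA$. But it is a genuine extra restriction on $\pi_0$, strictly stronger than Condition~\ref{condition:local_combination_support}, and it cannot be obtained by reinterpreting $a\in\calA_{\text{LCS}}(x)$. State it explicitly. Note also that the unbiasedness claim for LCPI inherits the same restriction, since it uses the identity for every $a$ with $\pi_\theta(a|x)>0$.
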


\begin{proof}
By Condition~\ref{condition:local_linearity_condition}, there is a fixed vector $\phi_x$ such that for each action $a$,
\begin{align*}
q(x, a) = \mathbb{I}_a^{T} \phi_x.
\end{align*}

Define a binary matrix $M \in \{0,1\}^{|\calA_{\text{LCS}}(x)| \times (dm + m^s)}$, whose rows are the row vectors $\mathbb{I}_a^{T}$ for each $a \in \calA_{\text{LCS}}(x)$. The vector $M \phi_x$ enumerates $q(x, a)$ for all $a \in \calA_{\text{LCS}}(x)$. Let $\mathrm{Null}(M)$ denote the null space of $M$, and let $\Pi$ denote the orthogonal projection onto $\mathrm{Null}(M)$. Define $\phi_x^\star := (I - \Pi) \phi_x$. Since $M \phi_x = M \phi_x^\star$, it follows that
\begin{align*}
q(x, a) = \mathbb{I}_a^{T} \phi_x = \mathbb{I}_a^{T} \phi_x^\star \quad \text{for all } a \in \calA_{\text{LCS}}(x).
\end{align*}

From the definition in Section~\ref{sec:pona}, we have
\begin{align}\label{eq:vprx}
\theta_{\pi_0, x} = \mathbb{E}_{\pi_0(a|x)}\left[\mathbb{I}_a q(x, a)\right] = \mathbb{E}_{\pi_0(a|x)}\left[\mathbb{I}_a \mathbb{I}_a^{T} \phi_x\right] = \Gamma_{\pi_0, x} \phi_x.
\end{align}

Observe that
\begin{align*}
\mathrm{Null}(\Gamma_{\pi_0, x}) &= \left\{v \mid v^{T} \Gamma_{\pi_0, x} v = 0 \right\} \\
&= \left\{v \mid \mathbb{E}_{\pi_0(a|x)}\left[v^{T} \mathbb{I}_a \mathbb{I}_a^{T} v \right] = 0 \right\} \\
&= \left\{v \mid \mathbb{I}_a^{T} v = 0, \forall a \in \calA_{\pi_0}(x) \right\} \\
&= \left\{v \mid \mathbb{I}_a^{T} v = 0, \forall a \in \calA_{\text{LCS}}(x) \right\} \\
&= \mathrm{Null}(M),
\end{align*}
where the first step follows from the positive semi-definiteness of $\Gamma_{\pi_0, x}$. 
Since $\mathrm{Null}(\Gamma_{\pi_0, x}) = \mathrm{Null}(M)$, it follows from Eq.~\eqref{eq:vprx} that
\begin{align*}
\theta_{\pi_0, x} = \Gamma_{\pi_0, x} \phi_x^\star.
\end{align*}
Moreover, from the definition of $\phi_x^\star$, we know that $\phi_x^\star \perp \mathrm{Null}(\Gamma_{\pi_0, x})$.
Thus, by the definition of the pseudoinverse,
\begin{align*}
\Gamma_{\pi_0, x}^\dagger \theta_{\pi_0, x} = \phi_x^\star.
\end{align*}

Putting this together, for $a \in \calA_{\text{LCS}}(x)$:
\begin{align*}
q(x, a) = \mathbb{I}_a^{T} \phi_x^\star = \mathbb{I}_a^{T} \Gamma_{\pi_0, x}^\dagger \theta_{\pi_0, x}.
\end{align*}
This completes the proof of Lemma~\ref{lem:slatevalue}.
\end{proof}

We now use the above Lemma to prove the unbiasedness of LCPI.
\begin{proof}
\begin{align*}
\mathbb{E}_{\calD} \left[\nabla_\theta \widehat{V}_{\mathrm{LCPI}}(\pi_\theta; \mathcal{D})\right] &= \mathbb{E}_{\calD}\left[\frac{1}{n}\sum_{i=1}^n \left(\sum_{a \in \mathcal{A}}\pi_\theta(a|x_i) \nabla_\theta \log\pi_\theta(a|x_i) \mathbb{I}_a^{T} \right)\Gamma_{\pi_0, x_i}^\dagger \mathbb{I}_{a_i} r_i\right] \\
&= \mathbb{E}_{p(x)\pi_0(a'|x)p(r|x, a')}\left[\left(\sum_{a \in \mathcal{A}}\pi_\theta(a|x) \nabla_\theta \log\pi_\theta(a|x) \mathbb{I}_a^{T} \right)\Gamma_{\pi_0, x}^\dagger \mathbb{I}_{a'} r\right] \\
&= \mathbb{E}_{p(x)}\left[\left(\sum_{a \in \mathcal{A}}\pi_\theta(a|x) \nabla_\theta \log\pi_\theta(a|x) \mathbb{I}_a^{T} \right)\Gamma_{\pi_0, x}^\dagger \mathbb{E}_{\pi_0(a'|x)}\left[\mathbb{I}_{a'} q(x, a')\right]\right] \\
&= \mathbb{E}_{p(x)}\left[\left(\sum_{a \in \mathcal{A}}\pi_\theta(a|x) \nabla_\theta \log\pi_\theta(a|x) \mathbb{I}_a^{T} \right)\Gamma_{\pi_0, x}^\dagger \theta_{\pi_0, x}\right] \\
&= \mathbb{E}_{p(x)}\left[\sum_{a \in \mathcal{A}}\pi_\theta(a|x) \nabla_\theta \log\pi_\theta(a|x) q(x, a)\right] \quad \because \text{Condition~\ref{condition:local_combination_support}, Lemma~\ref{lem:slatevalue}} \\
&= \nabla_\theta V(\pi_\theta).
\end{align*}
\end{proof}

\section{Experiment Details and Additional Result}
This section describes the detailed experiment settings and reports additional results. 

\subsection{Synthetic Experiments}
\paragraph{\textbf{Detailed Setup.}} We describe synthetic experiment settings in detail. First, we define the space of existing actions $\mathcal{A}_{existing}$ as follows:

\[
\mathcal{A}_{existing}:= \mathcal{A}_{\text{base}}^{\text{IS}} \cup \mathcal{A}_{\text{base}}^{\text{LCS}}(s) \cup \mathcal{A}_{\text{random}}
\]
where $\mathcal{A}_{\text{base}}^{\text{IS}}$ and $\mathcal{A}_{\text{base}}^{\text{LCS}}(s)$ are artificially constructed sets of actions that satisfy the Independent Support condition (\ref{condition:independent_support}) and Local Combination Support condition (\ref{condition:local_combination_support}), respectively, as below:
\begin{align*}
    &\mathcal{A}_{\text{base}}^{\text{IS}} := \{a=(f_{1, i},f_{2, i},\cdots, f_{d, i}) | \forall i \in m\}, \\ &\mathcal{A}_{\text{base}}^{\text{LCS}}(s) := \left\{a=(f_{1:s}, f_{s+1, 1}\cdots, f_{d, 1}) | \forall f_{1:s} \in \prod_{l=1}^s \mathcal{F}_l \right\}.
\end{align*}
Note that $f_{j, i}$ represent the $i$-th feature value in the $j$-th dimension, where $j \in [d]$ and $i \in [m]$. This construction of the action space ensures that the logging policy selects each feature at least once, satisfying the Independent Support condition. It also ensures that the logging policy selects all combinations of the first $s$ dimensions, satisfying the Local Combination Support condition. Additionally, $\mathcal{A}_{\text{random}}$ contains a set of actions randomly sampled from $\mathcal{A}$.

\subsection{Real-World Experiments}
\paragraph{\textbf{Detailed Setup.}} 
We describe the real-world experiment settings on KuaiRec \cite{gao2022kuairec} in detail. We use user information as the context $x$ and reduce the feature dimensions using PCA implemented in scikit-learn \cite{pedregosa2011scikit}. We define the reward as the user watch ratio, clipped at the 99th percentile and normalized. 

There are four types of action features: video tags, first-level categories, second-level categories, and third-level categories. For the combination features of LCPI and PONA (ranging from 1 to $s$ dimensions), we utilize three types: video tags, first-level categories, and third-level categories. For existing actions, we prioritize those with overlapping features in the 1- to $s$-dimensional combination features to ensure that Local Combination Support is satisfied as much as possible.

\allowdisplaybreaks

\end{document}